\date{}
\newif\ifarxiv
\newcommand{%
	\ifarxiv
		\includegraphics{tikz/.pdf}
	\else
		\tikzsetnextfilename{}
		\input{fig/.pgf}
	\fi
}[1]{%
	\ifarxiv
		\includegraphics{tikz/#1.pdf}
	\else
		\tikzsetnextfilename{#1}
		\input{fig/#1.pgf}
	\fi
}
\title{Prediction performance after learning \\ in Gaussian process regression}
\author[1]{Johan W{\aa}gberg\thanks{\url{johan.wagberg@it.uu.se}}}
\author[1]{Dave Zachariah\thanks{\url{dave.zachariah@it.uu.se}}}
\author[1]{Thomas B. Sch\"on\thanks{\url{dave.zachariah@it.uu.se}}}
\author[1]{Petre Stoica\thanks{\url{petre.stoica@it.uu.se}}}
\affil[1]{Department of Information Technology, Uppsala University}
\begin{document}

\maketitle

\textbf{Please cite this version:}

Johan W{\aa}gberg, Dave Zachariah, Thomas B. Sch\"on and Petre Stoica. Prediction performance after learning in Gaussian process regression. In \textit{Proceedings of the 20th International Conference on Artificial Intelligence and Statistics (AISTATS)}, Fort Lauderdale, FL, USA, April, 2017.

\begin{center}
\begin{minipage}{.9\linewidth}
\begin{lstlisting}[breaklines,basicstyle=\small\ttfamily]
@inproceedings{waagberg2017prediction,
  author    = {W{\aa}gberg, Johan and Zachariah, Dave and Sch{\"o}n, Thomas B. and Stoica, Petre},
  title     = {Prediction performance after learning in Gaussian process regression},
  booktitle = {Proceedings of the 20th International Conference on Artificial Intelligence and Statistics (AISTATS)},
  year      = {2017},
  month     = {4},
  address   = {Fort Lauderdale, FL, USA}
}
\end{lstlisting}
\end{minipage}
\end{center}
\vspace{5em}


\begin{abstract}
\noindent
This paper considers the quantification of the prediction performance in Gaussian process regression.
The standard approach is to base the prediction error bars on the theoretical predictive variance, which is a lower bound on the mean square-error (MSE).
This approach, however, does not take into account that the statistical model is learned from the data.
We show that this omission leads to a systematic underestimation of the prediction errors.
Starting from a generalization of the Cram\'er-Rao bound, we derive a more accurate MSE bound which provides a measure of uncertainty for prediction of Gaussian processes.
The improved bound is easily computed and we illustrate it using synthetic and real data examples.
\end{abstract}

\clearpage


\section{Introduction}
In this paper we consider the problem of learning a function $ f(\x)$ from a dataset $\mathcal{D}_{N}=\{\x_i, y_i \}^N_{i=1}$ where
\begin{equation}\label{eq:standardmodel}
	y = f(\x) + \varepsilon \; \in \; \mathbb{R}.
\end{equation}
The aim is to predict $f(\x_{\star})$ at a test point $\x_{\star}$.
In machine learning, spatial statistics and statistical signal processing, it is common to model $f(\x)$ as a Gaussian process (GP) and $\varepsilon$ as an uncorrelated zero-mean Gaussian noise \parencite{Stein1999_interpolation,Bishop2006_pattern,Murphy2012_machine,PerezCruzEtAl2013_gaussian}.
This probabilistic framework shares several properties with kernel and spline-based methods \parencite{Scholkopf&Smola2002_kernels,SuykensEtAl2002_lssvm,Rasmussen&Williams2006_gaussian}.
One of the strengths of the GP framework is that both a predictor $\what{f}(\x_{\star})$ and its error bars are readily obtained using the mean and variance of $f(\x_{\star})$.
This quantification of the prediction uncertainty is valuable in itself but also in applications that involve decision making, e.g. in the exploration-exploitation phase of active learning and control \parencite{Likar&Kocijan2007_predictive,Deisenroth&Rasmussen2011_pilco,deisenroth2015gaussian}.
Another recent example is Bayesian optimization techniques using Gaussian processes \parencite{ShahriariEtAl2016_gpoptimization}.

In general, however, the model for $f(\x)$ is not fully specified but contains unknown hyperparameters, denoted $\hp$, that can be learned from data.
Plugging an estimate $\what{\hp}$ into the predictor $\what{f}(\x_\star)$ will therefore inflate its errors due to the uncertainty of the learned model itself.
In this case the standard error bounds will systematically underestimate the actual prediction errors.
One possibility is to assign a prior distribution to $\hp$ and marginalize out the
parameters from the posterior distribution of $\check{f}_\star$ \parencite{WilliamsRasmussen1996}.
While conceptually straight-forward, this approach is challenging to implement in general as it requires the user to choose a reasonable prior distribution and computationally demanding numerical integration techniques.

Our contribution in this paper is the derivation of more accurate error bound for prediction after learning, using a generalization of the Cram\'er-Rao Bound (CRB) \parencite{Rao1945_information,Cramer1946_contribution}.
The bound is computationally inexpensive to implement using standard tools in the GP framework.
We illustrate the bound using both synthetic and real data.

\section{Problem formulation and related work} 
We consider a general input space $\x \in \mathcal{X}$.
To establish the notation ahead we write the Gaussian process and the vector of hyperparameters as 
\begin{equation}\label{eq:gpmodel}
	f(\x) \sim \gp{m_\alpha(\x)}{k_\beta(\x,\x^\prime)} \;\;\; \text{and} \;\;\; \hp = \bbm \hpmean \\ \hpcov \\ \hpstd^2 \ebm,
\end{equation}
where $\hpstd^2$ denotes the variance of $\varepsilon$ in \eqref{eq:standardmodel}.
The vectors $\hpmean$ and $\hpcov$ parameterize the mean and covariance functions, $m_\alpha(\x)$ and $k_\beta(\x,\x^\prime)$, respectively.
For an arbitrary test point $\x_\star$ we write $f_\star = f(\x_\star)$ and consider the mean-square error
\begin{equation*}
	\MSE{\what{f}_\star} \triangleq \Expb{}{| f_\star - \what{f}_\star |^2},
\end{equation*}
where the expectation is taken with respect to $f_\star$ and the data $\y$.
When $\hp$ is given, the optimal predictor is 
\begin{equation}\label{eq:predictor}
	\check{f}_\star(\hp) = m_\star  + \linpred^\Transp(\y - \m),
\end{equation}
where $\linpred = \left( \mbf{K} + \hpstd^2 \mbf{I} \right)^{-1} \mbf{k}_\star$, $m_\star = m_\alpha(\x_\star)$ and $\m = [m(\x_1) \: \cdots m(\x_N)]^\Transp$.
In addition, $\mbf{k}_\star = [k_\beta(\x_\star, \x_1) \: \cdots \: k_\beta(\x_\star,\x_N) ]^\Transp$ and $\mbf{K}= \{  k_\beta(\x_i, \x_j) \}_{i,j}$.
Eq. \eqref{eq:predictor} is equal to the mean of the predictive distribution $p(f_\star|\y,\hp)$ and is a function of both $\y$ and $\hp$ \parencite{Rasmussen&Williams2006_gaussian}.
The minimum MSE then follows directly from the predictive variance, denoted $\sigma^2_{\star|y}(\hp)$.
Here, however, we provide an alternative derivation based on a generalization of the CRB \parencite{vanTrees2013_detection,Gill1&Levit1995_applications,Zachariah&Stoica2015_cramer}.
This tool will also enable us to tackle the general problem considered later on.
\setcounter{resultcnt}{0}
\begin{thm}
	When $\hp$ is known,
	\begin{equation}\label{eq:BCRB}
		\MSE{\what{f}_{\star}} \geq \underbrace{k_{\star
                    \star} - \mbf{k}^\Transp_\star \left( \mbf{K} +
                    \hpstd^2 \mbf{I} \right)^{-1}
                  \mbf{k}_\star}_{=\sigma^2_{\star|y}(\hp)},
	\end{equation}
	where $k_{\star \star} = k_\beta(\x_\star, \x_\star)$.
\end{thm}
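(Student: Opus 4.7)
My plan is to derive the bound via the Bayesian form of the Cram\'er--Rao inequality, since the paper explicitly adopts that route (in preparation for the harder unknown-$\hp$ case to follow). The key structural fact is that under the GP model the test value $f_\star$ and the training responses $\y$ are jointly Gaussian, with means $m_\star$ and $\m$ and joint covariance
\[
\begin{pmatrix} k_{\star\star} & \mbf{k}_\star^\Transp \\ \mbf{k}_\star & \mbf{K} + \hpstd^2\mbf{I} \end{pmatrix}.
\]

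First I would view $f_\star$ as a random parameter with Gaussian prior $\mathcal{N}(m_\star, k_{\star\star})$ and write the conditional likelihood $p(\y\mid f_\star)$, which is itself Gaussian by the joint-Gaussianity above. The Bayesian CRB then yields, for any measurable estimator $\what{f}_\star(\y)$,
\[
\MSE{\what{f}_\star} \;\geq\; (J_D + J_P)^{-1},
\]
with prior Fisher information $J_P = 1/k_{\star\star}$ and data Fisher information $J_D = \Expb{}{-\partial^2 \log p(\y\mid f_\star)/\partial f_\star^2}$.

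Next I would compute $J_D$. Because $p(\y\mid f_\star)$ is Gaussian with mean affine in $f_\star$ (slope $\mbf{k}_\star/k_{\star\star}$) and $f_\star$-independent covariance $\mbf{K} + \hpstd^2\mbf{I} - \mbf{k}_\star \mbf{k}_\star^\Transp/k_{\star\star}$, the score is linear in $f_\star$, so $J_D$ is deterministic and equals the quadratic form
\[
J_D \;=\; \frac{1}{k_{\star\star}^2}\,\mbf{k}_\star^\Transp \bigl(\mbf{K} + \hpstd^2\mbf{I} - \mbf{k}_\star \mbf{k}_\star^\Transp/k_{\star\star}\bigr)^{-1} \mbf{k}_\star.
\]

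Finally I would simplify $(J_D + J_P)^{-1}$ to the stated expression $k_{\star\star} - \mbf{k}_\star^\Transp(\mbf{K} + \hpstd^2\mbf{I})^{-1}\mbf{k}_\star$. The main technical step is this algebraic one: a Sherman--Morrison expansion of the conditional covariance inverse, after which the arithmetic telescopes into the Schur complement of the $(2,2)$ block of the joint covariance --- which is exactly $\sigma^2_{\star|y}(\hp)$. As a sanity check, the bound is attained with equality by the predictor $\check{f}_\star(\hp)$ in~\eqref{eq:predictor}, so the inequality is in fact tight.
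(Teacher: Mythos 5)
Your proposal is correct, and the algebra goes through: with $q = \mbf{k}_\star^\Transp(\mbf{K}+\hpstd^2\mbf{I})^{-1}\mbf{k}_\star$, Sherman--Morrison gives $\mbf{k}_\star^\Transp\bigl(\mbf{K}+\hpstd^2\mbf{I}-\mbf{k}_\star\mbf{k}_\star^\Transp/k_{\star\star}\bigr)^{-1}\mbf{k}_\star = q k_{\star\star}/(k_{\star\star}-q)$, hence $J_D + J_P = 1/(k_{\star\star}-q)$ and the bound is exactly $\sigma^2_{\star|y}$. However, you take a genuinely different route from the paper. You use the Van Trees decomposition $J_\star = J_D + J_P$ built on the factorization $p(\y,f_\star\mid\hp) = p(\y\mid f_\star,\hp)\,p(f_\star\mid\hp)$, computing the prior information $1/k_{\star\star}$ and the expected data Fisher information separately and then recombining them. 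The paper instead factorizes the other way, $p(\y,f_\star\mid\hp) = p(f_\star\mid\y,\hp)\,p(\y\mid\hp)$, notes that the second factor is constant in $f_\star$, and reads off $J_\star = 1/\sigma^2_{\star|y}$ in one line because the posterior of $f_\star$ is Gaussian with precisely that variance --- no matrix identity needed. What your route buys is an explicit accounting of how much information comes from the prior versus the data, which is conceptually informative; what it costs is the Sherman--Morrison computation and two extra (mild) regularity assumptions, namely $k_{\star\star}>0$ and invertibility of the conditional covariance of $\y$ given $f_\star$, neither of which the paper's argument requires. You also implicitly rely on the standard fact that the cross term $\E\bigl[\tfrac{\partial}{\partial f_\star}\ln p(\y\mid f_\star,\hp)\cdot\tfrac{\partial}{\partial f_\star}\ln p(f_\star\mid\hp)\bigr]$ vanishes so that the informations add; this holds here by the zero-mean property of the conditional score, but it is worth stating. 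Your closing remark on tightness matches the paper's: equality holds for the conditional-mean predictor $\check{f}_\star(\hp)$.
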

\begin{proof}
The Bayesian Cramér-Rao Bound (BCRB) is given by
\begin{equation*}
	\MSE{\what{f}_{\star}} \geq J^{-1}_\star,
\end{equation*}
where $J_\star \triangleq\Exp{\left( \frac{\partial}{\partial f_\star}\ln p(\y, f_\star \mid \hp) \right)^2}$ is the Bayesian information of $f_\star$ \parencite{vanTrees2013_detection}.
Using the chain rule, $\ln p(\y, f_\star \mid \hp) = \ln p( f_\star \mid \y,\hp)  + \ln p(\y \mid \hp) $, we obtain
\begin{equation}\label{eq:df}
	\begin{aligned}
		\frac{\partial}{\partial f_\star} \ln p(\y, f_\star | \hp)
			&= \frac{\partial}{\partial f_\star} \ln p( f_\star | \y, \hp)  + 0 \\
			&= \frac{\partial}{\partial f_\star} \left( -\frac{1}{2}\ln (2\pi \sigma^2_{\star|y}) - \frac{1}{2\sigma^2_{\star|y}} ( f_\star - \check{f}_\star(\hp))^2 \right) \\
			&= -\sigma^{-2}_{\star|y} \left( f_\star - \check{f}_\star(\hp)\right),
	\end{aligned}
\end{equation}
under the assumptions made.
Then the Bayesian information equals
\begin{equation}
	\begin{split}
		J_\star &=\E\left[ \sigma^{-4}_{\star|y} \left( f_\star - \check{f}_\star(\hp)\right)^2 \right] = \frac{1}{\sigma^2_{\star|y}}.
	\end{split}
	\label{eq:HIM_star}
\end{equation}
\end{proof}
\textbf{Remarks:}
The lower bound \eqref{eq:BCRB} on the MSE, and the corresponding minimum error bars for a predictor $\what{f}_\star$, reflects the uncertainty of $f_\star$ alone.
The bound is attained when $\what{f}_\star$ coincides with \eqref{eq:predictor} which depends on $\hp$.
In general, however, $\hp$ is unknown and typically learned from the data.
Then the bound \eqref{eq:BCRB} will not reflect the additional errors of $\what{f}_\star$ arising from the unknown model parameters $\hp$.
The effect is a systematic underestimation of the prediction errors.
For illustrative purposes we present an example with one-dimensional inputs, see Example~\ref{ex:basicexample} below.


\begin{exmp}\label{ex:basicexample}
Consider the Gaussian process \eqref{eq:standardmodel} for $x \in \mathbb{R}$ with a linear mean function and a squared-exponential covariance function.
That is, $m_\alpha(x) = \alpha x$ and $k_\beta(x,x^\prime) = \beta^2_0 \exp\left( -\frac{1}{2\beta^2_1} \| x - x^\prime \|^2 \right)$ in \eqref{eq:gpmodel}.
The process is sampled at $N=10$ different points and the unknown hyperparameters are learned from the dataset by maximizing the marginal likelihood, $\what{\hp} = \argmax_{\hp} \; \int p(\y, \f | \hp) d\f$, where the vector $\f$ contains the $N$ latent function values in the data.

Figure~\ref{fig:basicexample} illustrates a realization of $f(x)$ along with the predicted values $\what{f}(x)$.
The error bars are obtained from \eqref{eq:BCRB} which was derived under the assumption of $\hp$ being known.
The bars severely underestimate the uncertainty of the predictor since they remain nearly constant along the input space and do not contain the example realization of $f(x)$.
\begin{figure}[!ht]
	\centering%
	\pgfplotsset{
		every axis/.append style={
			legend style={
				font=\small,
				row sep=-0pt,
				draw=none},
			axis line on top
			},
		every axis post/.append style={
			legend style={
				draw=none,
				fill=none},
			ymax=15,
			},
		every axis plot post/.append style={
			very thick,
			every mark/.append style={scale=2}},
		every tick label/.append style={
			font=\footnotesize
			}
	}
    \setlength{\figureheight}{.40\columnwidth}
    \setlength{\figurewidth}{.80\columnwidth}
    
  %
	\ifarxiv
		\includegraphics{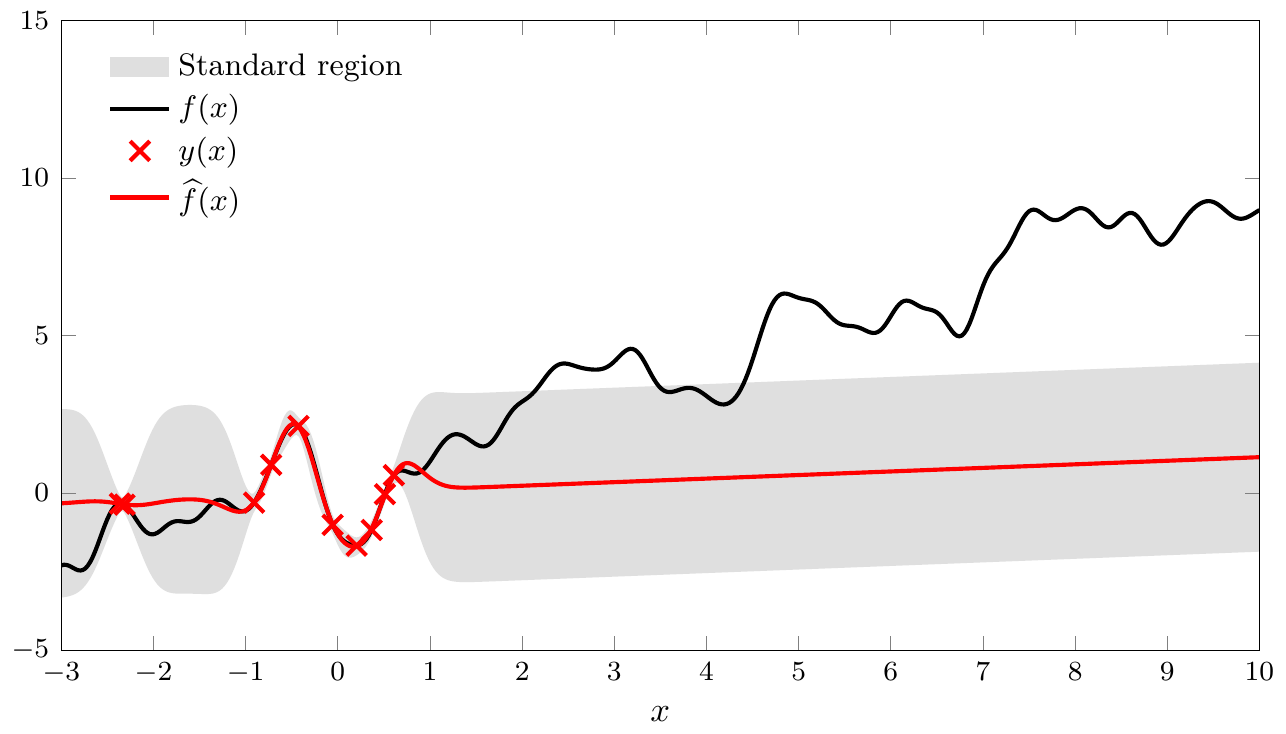}
	\else
		\tikzsetnextfilename{linear_mean_sq_exp_cov_part1}
		\input{fig/linear_mean_sq_exp_cov_part1.pgf}
	\fi

	\caption{
		Predictions of $f(x)$ using hyperparameters that have been learned from data.
		The shaded error bars are credibility regions corresponding to $\what{f}(x)\pm 3 \sigma_{\star|y}$.
	}
	\label{fig:basicexample}
\end{figure}
\end{exmp}

A tighter MSE bound than \eqref{eq:BCRB} has been derived for the special case in which $\hpcov$ is assumed to be known and the mean function is linear in the parameters, i.e., $m_\alpha(\x) = \hpmean^\Transp \mbf{u}(\x)$ where $\mbf{u}(\x)$ is a given basis function \parencite{Stein1999_interpolation}.
In the statistics literature, there have been attempts to extend to the analysis to models in which the covariance parameters $\hpcov$ are unknown.
The results do, however, not generalize to nonlinear $m_\alpha(\x)$ and are either based on computationally demanding Taylor-series expansions or bootstrap techniques \parencite{Zimmerman&Cressie1992_mean, DenHertogEtAl2006_correct}.

The goal of this paper is to derive a computationally inexpensive lower bound on the MSE that will provide more accurate error bars on $\what{f}_\star$ when $\hp$ is unknown.

\section{Prediction errors after learning the hyperparameters}
In the general setting when $\hp$ is unknown we have the following lower bound on the MSE. 
\begin{thm}\label{res:hcrb}
	When $\hp$ is learned from $\y$ using an unbiased estimator, we have that:
	\begin{equation}\label{eq:HCRB}
		\boxed{\text{MSE}(\what{f}_\star) \geq \sigma^2_{\star|y} + \dmean^\Transp\mbf{M}^{-1} \dmean,}
	\end{equation}
	where 
	\begin{equation}
		\begin{aligned}
			\dmean = \frac{\partial}{\partial \hpmean} (m_\star - \m^\Transp \linpred ) \quad \text{and} \quad
			\mbf{M} = \frac{\partial \m^\Transp}{\partial \hpmean} (\K + \sigma^2 \mbf{I} )^{-1} \frac{\partial \m}{\partial \hpmean^\Transp} .
		\end{aligned}
	\end{equation}
	Comparing with \eqref{eq:BCRB}, the non-negative term
        $\dmean^\Transp \mbf{M}^{-1} \dmean \geq 0$ is the additional
        error incurred due to the lack of information about $\hp$. 
\end{thm}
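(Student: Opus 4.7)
I would obtain \eqref{eq:HCRB} from a hybrid Cram\'er--Rao bound (HCRB) on the composite parameter $\theta = (f_\star, \hpmean)$, in which $f_\star$ is random with known conditional law (given $\hp$) and $\hpmean$ plays the role of a deterministic but unknown nuisance parameter. A useful preliminary remark is that restricting the set of unknowns to $\hpmean$ alone---that is, treating $\hpcov$ and $\hpstd^2$ as known---can only decrease the best attainable MSE of $\what{f}_\star$, so any lower bound derived in this restricted setting is automatically a lower bound when the full vector $\hp$ is estimated. This frees the derivation from having to compute scores and information contributions from the covariance-side parameters.

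Next I would assemble the hybrid information matrix
\[
	\mathcal{I} = \bbm J_{11} & J_{12} \\ J_{21} & J_{22} \ebm,
\]
whose top-left entry is $J_{11} = 1/\sigma^2_{\star|y}$ by \eqref{eq:HIM_star}. Differentiating $\ln p(\y,f_\star\mid\hp) = \ln p(f_\star\mid\y,\hp) + \ln p(\y\mid\hp)$ with respect to $\hpmean$, and exploiting that $\hpmean$ enters neither $\linpred$ nor $\sigma^2_{\star|y}$ but only the means $\m$, $m_\star$ (hence also $\check{f}_\star$), the $\hpmean$-score splits into an innovation term $\sigma^{-2}_{\star|y}(f_\star-\check{f}_\star)\dmean$ and a data-residual term $\frac{\partial \m^\Transp}{\partial\hpmean}(\K + \hpstd^2\mbf{I})^{-1}(\y-\m)$. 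Combined with the $f_\star$-score already computed in \eqref{eq:df}, the outer-product expectations give $J_{12} = -\sigma^{-2}_{\star|y}\dmean^\Transp$ and $J_{22} = \sigma^{-2}_{\star|y}\dmean\dmean^\Transp + \mbf{M}$; the clean additive form of the latter reflects that the two score pieces are uncorrelated, one vanishing in conditional expectation given $\y$ while the other is $\y$-measurable.

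The final step is algebraic: the HCRB yields $\MSE{\what{f}_\star} \geq [\mathcal{I}^{-1}]_{11} = (J_{11} - J_{12}J_{22}^{-1}J_{21})^{-1}$, and applying the Sherman--Morrison identity to the rank-one update $\mbf{M} + \sigma^{-2}_{\star|y}\dmean\dmean^\Transp$ should collapse the Schur complement to $1/(\sigma^2_{\star|y} + \dmean^\Transp\mbf{M}^{-1}\dmean)$, whose reciprocal is precisely \eqref{eq:HCRB}. I expect the main obstacle to be bookkeeping rather than conceptual: verifying carefully that the innovation and data-residual score pieces are uncorrelated (so that $J_{22}$ separates additively) and carrying the Sherman--Morrison step through without sign or constant errors, so that the Schur complement telescopes to the advertised closed form.
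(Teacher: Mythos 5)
Your plan is correct and the algebra does close as you expect: the $\hpmean$-score of $\ln p(\y,f_\star\mid\hp)$ is indeed $\sigma^{-2}_{\star|y}(f_\star-\check{f}_\star)\dmean + \tfrac{\partial \m^\Transp}{\partial\hpmean}(\K+\hpstd^2\mbf{I})^{-1}(\y-\m)$, the two pieces decorrelate because $\E[f_\star-\check{f}_\star\mid\y]=0$, and the Sherman--Morrison step gives exactly $(J_{11}-J_{12}J_{22}^{-1}J_{21})^{-1}=\sigma^2_{\star|y}+\dmean^\Transp\mbf{M}^{-1}\dmean$. Your route differs from the paper's in two useful ways. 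First, the paper keeps the full vector $\hp$ in the hybrid information matrix, computes all the scores from the joint Gaussian density of $[\y^\Transp\;f_\star]^\Transp$, and shows via the vanishing odd moments of the Gaussian that the cross-information between $f_\star$ and $(\hpcov,\hpstd^2)$ is zero, so those blocks drop out of the Schur complement; you instead discard them a priori by the monotonicity of the CRB in nested models. Your shortcut is legitimate and shorter, but it only delivers a valid lower bound --- it does not by itself establish the paper's additional observation that including the covariance and noise parameters would not tighten the bound further (i.e., that the restricted and full bounds coincide), which is precisely what the paper's zero cross-information computation shows. Second, by differentiating through the factorization $p(\y,f_\star\mid\hp)=p(f_\star\mid\y,\hp)\,p(\y\mid\hp)$ rather than through the joint Gaussian form, you obtain $J_{22}=\mbf{M}+\sigma^{-2}_{\star|y}\dmean\dmean^\Transp$ directly, which makes the block-inverse identity the paper relegates to Appendix~\ref{sec:appendix:proofofequality} unnecessary --- a genuine simplification. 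The only caveat is that your argument, like the paper's main proof, still rests on the unbiasedness (or at least parameter-independent bias) of $\what{\hp}$ with respect to the deterministic parameters, which the paper's Appendix~\ref{sec:appendixB} addresses separately.
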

\textbf{Remarks:}
Eq.~\eqref{eq:HCRB} is the Hybrid Cram\'er-Rao Bound which we abbreviate as $\HCRB(\hp) \triangleq \sigma^2_{\star|y} + \dmean^\Transp\mbf{M}^{-1} \dmean$ \parencite{Rockah&Schultheiss1987_array,vanTrees2013_detection}.

First, note that $\dmean^\Transp\mbf{M}^{-1} \dmean$ will be non-zero even in the simplest models where the data has an unknown constant mean, i.e., $m_\alpha(\x) \equiv \alpha$.

Second, \eqref{eq:HCRB} depends on the unknown covariance parameters $\hpcov$ only via $\mbf{M}$ and not through any gradients as would be expected.
As we show in the proofs below, this is follows from the properties of the Gaussian data distribution.
In the special case of linear mean functions, $m_\alpha(\x) = \hpmean^\Transp \mbf{u}(\x)$, \eqref{eq:HCRB} coincides with MSE of the universal kriging estimator which assumes $\hpcov$ to be known \parencite{Stein1999_interpolation}.

Third, under standard regularity conditions, the maximum likelihood approach will yield estimates of $\hp$ that are asymptotically unbiased and attain their corresponding error bounds \parencite{vanTrees2013_detection}.

\begin{proof}
	The HCRB for $f_\star$ is given by
	\begin{equation}\label{eq:HCRB_general}
		\begin{split}
		 \MSE{\what{f}_\star} \geq (J_\star - \FIM^\Transp_{\theta,\star} \FIM^{-1}_{\theta} \FIM_{\theta,\star} )^{-1}, 
		\end{split}
	\end{equation}
	where the matrices are given by the hybrid information matrix
	\begin{equation}\label{eq:HIM}
		\begin{aligned}
		\FIM
			\triangleq \Exp{
			\bbm
				\frac{\partial \ln p(\y, f_\star | \hp )}{\partial f_\star} \\
				\frac{\partial \ln p(\y, f_\star | \hp )}{\partial \hp}
			\ebm
			\bbm
				\frac{\partial \ln p(\y, f_\star | \hp )}{\partial f_\star} \\
				\frac{\partial \ln p(\y, f_\star | \hp )}{\partial \hp}
			\ebm^\Transp}
			= 
			\bbm
				J_\star & \FIM^\Transp_{\theta,\star} \\ \FIM_{\theta,\star} & \FIM_\theta
			\ebm.
		\end{aligned}
	\end{equation}
	To prepare for the subsequent steps, we introduce $\yf = [\y^\Transp \; f_\star]^\Transp$ and let $\yfmean$ and $\yfcov$ denote the joint mean and covariance matrix respectively, i.e. $\yf \sim \mathcal{N}(\yfmean, \yfcov)$.
	Next, we define the linear combiner
	\begin{equation*}
		\wtilde{\linpred}^\Transp = \bbm \0 & 1\ebm - \linpred^\Transp \bbm \mbf{I} & \0 \ebm
	\end{equation*}
	and note that 
	\begin{equation}\label{eq:linearcombiner}
		\wtilde{\linpred}^\Transp(\yf - \yfmean) =
		\wtilde{\linpred}^\Transp \left( \bbm \y \\ f_\star \ebm - \bbm \m \\ m_\star \ebm \right)= f_\star - \what{f}_\star.
	\end{equation}
	Similarly, $\wtilde{\linpred}^\Transp \frac{\partial \yfmean}{\partial \hpmean^\Transp} = \frac{\partial }{\partial \hpmean^\Transp} \tilde{\linpred}^\Transp \yfmean = \dmean^\Transp$.

	To compute the block $\FIM_{\theta,\star}$ in \eqref{eq:HIM}, we first establish the following derivatives:
	\begin{equation*}
		\begin{aligned}
			\frac{\partial}{\partial \hpmean} \ln p(\yf|\hp)
				&=\frac{\partial \yfmean^\Transp}{\partial \hpmean} \yfcov^{-1}(\yf - \yfmean), \\
			\frac{\partial}{\partial \beta_i} \ln p(\yf|\hp)
				&= -\frac{1}{2} \text{tr}\left\{ \yfcov^{-1} \frac{\partial \yfcov}{\partial \beta_i}\right\} + \frac{1}{2} ( \yf - \yfmean )^\Transp \yfcov^{-1} \frac{\partial \yfcov}{\partial \beta_i} \yfcov^{-1} ( \yf - \yfmean ), \\
			\frac{\partial}{\partial \hpstd^2} \ln p(\yf|\hp)
				&= -\frac{1}{2}\text{tr}\left\{ \yfcov^{-1} \frac{\partial \yfcov}{\partial \hpstd^2}\right\} + \frac{1}{2} ( \yf - \yfmean )^\Transp \yfcov^{-1}\frac{\partial \yfcov}{\partial \hpstd^2} \yfcov^{-1} ( \yf - \yfmean )\\
			\frac{\partial}{\partial f_\star} \ln p(\yf|\hp)
				&= - \hpstd^{-2}_{\star|y}\wtilde{\linpred}^\Transp( \yf - \yfmean),
		\end{aligned}
	\end{equation*}
	where the last equality follows from \eqref{eq:df} and \eqref{eq:linearcombiner}.
	Then we obtain
	\begin{align*}
		\Exp{ \frac{\partial}{\partial f_\star} \ln p(\yf|\hp) \frac{\partial}{\partial \hpmean^\Transp} \ln p(\yf|\hp)}
			&= - \hpstd^{-2}_{\star|y}\wtilde{\linpred}^\Transp \Exp{( \yf - \yfmean) (\yf - \yfmean)^\Transp}\yfcov^{-1} \frac{\partial \yfmean}{\partial \hpmean^\Transp} \\
			&=- \hpstd^{-2}_{\star|y} \dmean^\Transp
	\end{align*}
	and
	\begin{align*}
		\Exp{\frac{\partial}{\partial f_\star} \ln p(\yf|\hp)\frac{\partial}{\partial \beta_i} \ln p(\yf|\hp)}
			&= \frac{1}{2} \hpstd^{-2}_{\star|y} \wtilde{\linpred}^\Transp \underbrace{\E[(\yf - \yfmean)]}_{=\0} \text{tr}\left\{\yfcov^{-1} \frac{\partial \yfcov}{\partial \beta_i} \right\} -\\
			&\quad- \frac{1}{2} \hpstd^{-2}_{\star|y} \wtilde{\linpred}^\Transp\underbrace{ \Exp{ ( \yf - \yfmean)  ( \yf - \yfmean )^\Transp \yfcov^{-1} \frac{\partial \yfcov}{\partial \beta_i} \yfcov^{-1} ( \yf - \yfmean) }}_{=\0} \\
			&= 0.
	\end{align*}
	Similarly, $\E \left[ \frac{\partial}{\partial f_\star} \ln p(\yf|\hp)\frac{\partial}{\partial \hpstd^2} \ln p(\yf|\hp) \right] = 0$. 	Therefore
	\begin{equation}\label{eq:HIM_cross}
		\begin{split}
			\FIM^\Transp_{\theta,\star} = \bbm - \hpstd^{-2}_{\star|y} \dmean^\Transp & \0 & 0 \ebm. 
		\end{split} 
	\end{equation}
	Next, using the distribution of $\yf$, $\FIM_\theta$ is obtained via Slepian-Bangs formula \parencite{Slepian1954_estimation,Bangs1971_array,Stoica&Moses2005_spectral}:
	\begin{equation*}
		\begin{split}
			\{ \FIM_\theta \}_{i,j} &= \frac{\partial \yfmean^\Transp}{\partial\theta_i} \yfcov^{-1} \frac{\partial \yfmean}{\partial \theta_j} + \frac{1}{2} \text{tr} \left\{ \yfcov^{-1} \frac{\partial \yfcov}{\partial \theta_i}  \yfcov^{-1}  \frac{\partial \yfcov}{\partial \theta_j}  \right \}.
		\end{split}
	\end{equation*}
	This yields a block-diagonal matrix
	\begin{equation}\label{eq:HIM_theta}
		\begin{split}
		\FIM_\theta &= \bbm \frac{\partial \yfmean^\Transp}{\partial \hpmean } \yfcov^{-1} \frac{\partial \yfmean}{\partial \hpmean^\Transp } & \0  & \0 \\ \0 & * & * \\ \0 & * & * \ebm.
		\end{split}
	\end{equation}
	where the right-lower block does not affect \eqref{eq:HCRB_general} due to the zeros in \eqref{eq:HIM_cross}.
	Inserting \eqref{eq:HIM_cross}, \eqref{eq:HIM_theta} and \eqref{eq:HIM_star} into \eqref{eq:HCRB_general} then yields
	\begin{equation}
		\begin{aligned}
			 \mse \! \left({\what{f}_\star}\right)
				&\!\geq\! \left(\sigma^{-2}_{\star|y} \!-\! \hpstd^{-2}_{\star|y}\dmean^\Transp \left(\frac{\partial \yfmean^\Transp}{\partial\hpmean}\yfcov^{-1} \frac{\partial \yfmean}{\partial\hpmean^\Transp } \right)^{\mathrlap{-1}} \dmean\hpstd^{-2}_{\star|y} \right)^{\!\!\!\!-1} \\
				&= \sigma^{2}_{\star|y} +  \dmean^\Transp \left(   \frac{\partial \yfmean^\Transp}{\partial\hpmean} \yfcov^{-1} \frac{\partial \yfmean}{\partial\hpmean^\Transp}  - \sigma^{-2}_{\star|y} \dmean \dmean^\Transp\right)^{\mathrlap{-1}} \dmean,
		\end{aligned}
	\end{equation}
	where the last equality follows from the matrix inversion lemma.
	Using the properties of the block-inverse of $\yfcov$, we show that the inner parenthesis equals $\frac{\partial \m^\Transp}{\partial \hpmean} (\K + \sigma^2 \mbf{I} )^{-1} \frac{\partial \m}{\partial \hpmean^\Transp}$ in Appendix~\ref{sec:appendix:proofofequality}.
\end{proof}

\textbf{Remark:}
Result~\ref{res:hcrb} is based on the framework in \textcite{Rockah&Schultheiss1987_array}, se also \textcite{vanTrees2013_detection}.
This assumes that the bias of the learning method $\what{\hp}$ is zero.
In Appendix~\ref{sec:appendixB} we provide an alternative proof of Result~\ref{res:hcrb} that relaxes this assumption.


\def \mytmp {\value{exmpcnt}}
\setcounter{exmpcnt}{0}
\begin{exmp}{(cont'd)}
To illustrate the difference between \eqref{eq:BCRB} and \eqref{eq:HCRB}, consider Figure~\ref{fig:basicexample_hcrb}.
It shows the same realization of $f(x)$ as in Figure~\ref{fig:basicexample}, along with the predicted values $\what{f}(x)$.
The error bars are now obtained from \eqref{eq:HCRB} which takes into account that $\hp$ has to be learned from the data.
These bars clearly quantify the errors more accurately and contain the realization $f(x)$, in contrast to the standard approach. 
\begin{figure}[!ht]
	\centering  
	\pgfplotsset{
		every axis/.append style={
			legend style={
				font=\small,
				row sep=-0pt,
				draw=none,
				axis line on top
				}
			},
		every axis post/.append style={
			legend style={
				draw=none,
				fill=none
				}
			},
		every axis plot post/.append style={
		  very thick,
		  every mark/.append style={
			scale=2
			}
		},
		every tick label/.append style={
			font=\footnotesize
			}
	}
	\setlength{\figureheight}{.40\columnwidth}
	\setlength{\figurewidth}{.80\columnwidth}
	%
	\ifarxiv
		\includegraphics{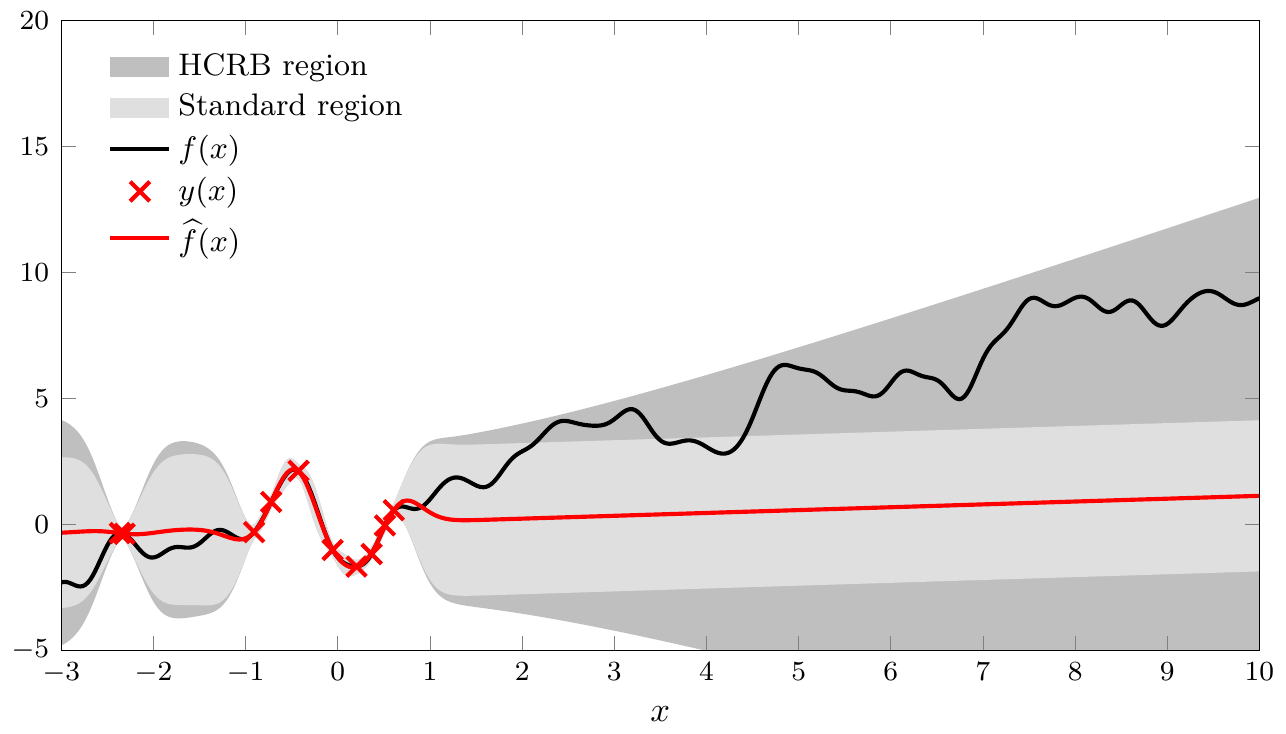}
	\else
		\tikzsetnextfilename{linear_mean_sq_exp_cov_part1_crlb}
		\input{fig/linear_mean_sq_exp_cov_part1_crlb.pgf}
	\fi

	\caption{
		Predictions of $f(x)$ using hyperparameters that have been learned from data.
		The dark shaded error bars are regions corresponding to $\what{f}(x)\pm 3 \sqrt{\HCRB}$.
	}
	\label{fig:basicexample_hcrb}
\end{figure}
\end{exmp}
\setcounter{exmpcnt}{\mytmp}


\def \mytmp {\value{exmpcnt}}
\begin{exmp}{(Time series prediction)}
Here we consider a temporal process with an unknown linear trend and periodicity (per) modeled by mean function $$m_{\alpha}(x) = \alpha_1 + \alpha_2 x,$$ covariance kernel $$k^{\text{per}}_{\beta}(x,x^\prime) = \beta^2_1 \exp\left(-\frac{2}{\beta_2^2}\sin^2\left(\frac{\pi}{\beta_3} \|x-x^\prime\|\right) + \frac{1}{\beta^2_4}\|x-x^\prime\|^2\right)$$	 and unit noise level.
In Figure~\ref{fig:periodicexample} we show a single realization of the process together with the prediction error bars computed using both the predictive variance and the HCRB.
As can be seen, $f(x)$ falls outside of the credibility region provided by the standard method.
\begin{figure}[!ht]
	\centering  
	\pgfplotsset{
	every axis/.append style={
		legend style={
			row sep=-0pt,
			draw=none
			},
		axis line on top
		},
	every axis post/.append style={
		legend style={
			font=\small,
			draw=none,
			fill=none
		},
		xmax=6,
		xmin=-3
	},
	every axis plot post/.append style={
		very thick,
		every mark/.append style={
			scale=2
			}
		},
	every tick label/.append style={
		font=\footnotesize
		},
	}
	\setlength{\figureheight}{.40\columnwidth}
	\setlength{\figurewidth}{.80\columnwidth}
	%
	\ifarxiv
		\includegraphics{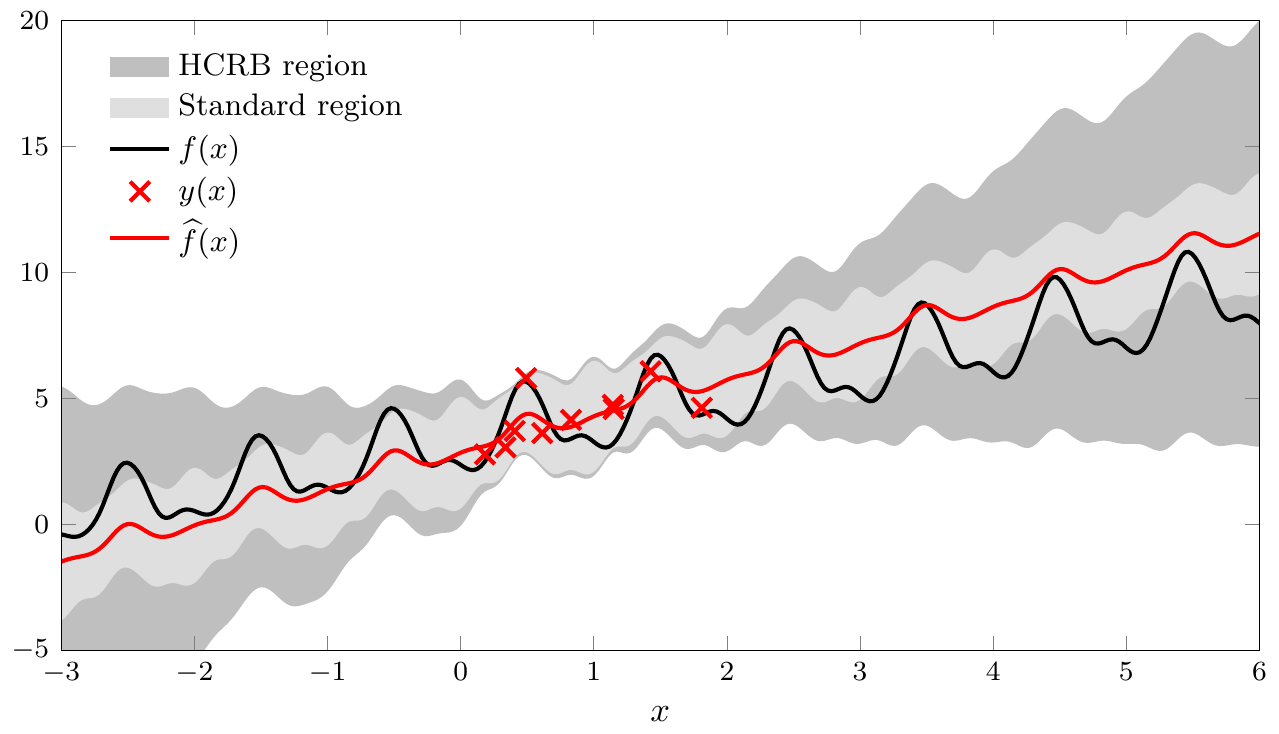}
	\else
		\tikzsetnextfilename{affine_mean_periodic_cov_part2_crlb}
		\input{fig/affine_mean_periodic_cov_part2_crlb.pgf}
	\fi

	\caption{
		Predictions of $f(x)$ using hyperparameters that have been learned from data.
		The dark shaded error bars are regions corresponding to $\what{f}(x)\pm 3 \sqrt{\HCRB}$.
	}
    \label{fig:periodicexample}
\end{figure}
\end{exmp}
\setcounter{exmpcnt}{\mytmp}

\section{Examples}
We illustrate the $\HCRB$ and its practical utility by means of several examples.
For the sake of visualization, we have considered problems with one-dimensional inputs, but the $\HCRB$ is of course valid for any dimension of $\mathcal{X}$.
The first set of examples use synthetically generated datasets in order to assess the accuracy of the error bound.
The final example uses real $\text{CO}_2$ concentration data.
We used the maximum likelihood approach to learn $\hp$ in all examples but alternative methods, such as cross-validation, could be considered as well.

\subsection{Synthetic data}
First, we consider a process $f(x)$ with the popular squared-exponential covariance (SE) function
\begin{equation}
	k^{\text{SE}}_{\beta}(x,x') = \beta^2_1 \exp\left(-\frac{1}{2\beta_2^2}\|x-x^\prime\|^2\right),
	\label{eq:SE}
\end{equation}
where we assume both the signal variance $\beta_1^2$ and length scale $\beta_2$ to be unknown.
As mean function, we assign the most basic model, a constant mean $m_{\alpha}(x) = \alpha$, where $\alpha$ is unknown. 
The unknown hyperparameters generating the data are denoted
\begin{equation}
	\hp_0 = \bbm \alpha_0 \\ \hpcov_0 \\ \sigma^2_0 \ebm,\text{ where }
	\begin{cases}
		\alpha_0 = 20, \\
		\hpcov_0 = \bbm 2 & 0.8 \ebm^{\Transp}, \\
		\sigma^2_0 = 2^2.
	\end{cases}
	\label{eq:SE_theta_0}
\end{equation}
Figure~\ref{fig:meanConst_covSEard_1000_iterations} shows the empirical MSE of \eqref{eq:predictor} after learning the hyperparameters from $N=25$ observations (obtained from $10^3$ Monte Carlo iterations), where $\what{f}(\hpmean,\hpcov,\sigma^2)$ denotes evaluating the predictor using mean parameter $\hpmean$, covariance parameter $\hpcov$ and noise level $\sigma^2$.
We compare this error with the theoretical bounds given by \eqref{eq:BCRB} and \eqref{eq:HCRB}.
We see that the bound $\HCRB(\hp_0)$ is tight as expected in this example and that $\sigma^2_{\star|y}(\hp_0)$ systematically underestimates the errors.
Similarly, when using estimated bounds by inserting the learned hyperparameters $\what{\hp}$ into \eqref{eq:BCRB} and \eqref{eq:HCRB} the gap between $\sigma^2_{\star|y}(\what{\hp})$ and $\HCRB(\what{\hp})$ not extreme, but still present, with the latter giving a better representation of the true error than the estimated predictive variance.

\begin{figure}[!htb]
	\centering  
	\pgfplotsset{
		every axis/.append style={
			legend style={
				font=\small,
				row sep=1pt
				},
			axis line on top
			},
		every axis post/.append style={
			ymin=0,
			ymax=10,
			xmin=-8,
			xmax=8,
			legend style={
				draw=none,
				fill=white,
				at={(0.60,0.97)}
			},
		},
        every axis plot post/.append style={
			mark size=3pt,
			very thick,
			},
        every tick label/.append style={
			font=\footnotesize
			}
      }
    \setlength{\figureheight}{.40\columnwidth}
    \setlength{\figurewidth}{.80\columnwidth}
	%
	\ifarxiv
		\includegraphics{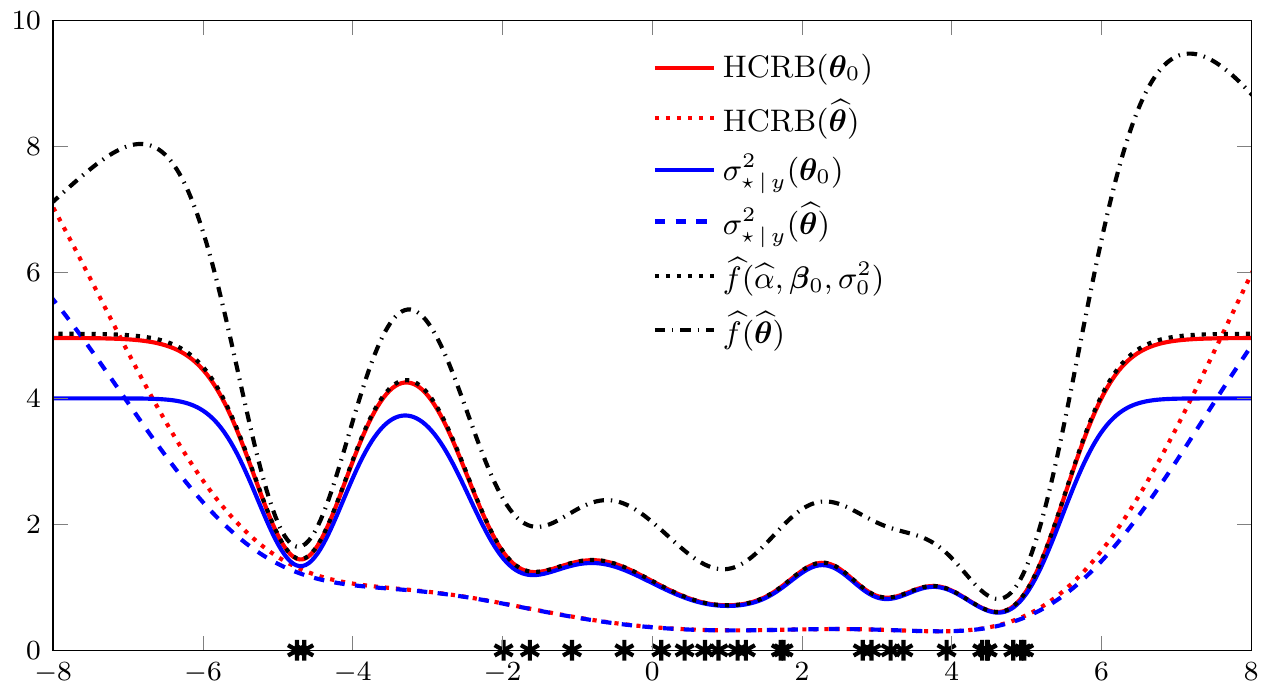}
	\else
		\tikzsetnextfilename{meanConst_covSEard_1000_iterations}
		\input{fig/meanConst_covSEard_1000_iterations.pgf}
	\fi

	\caption{
		MSE of predictors along $x \in \mathcal{X}$ using learned hyperparameters and the bounds \eqref{eq:BCRB} and \eqref{eq:HCRB}.
		The red curves show the true and estimated HCRB, based on $\hp_0$ and $\what{\hp}$, respectively.
		In blue, we show predictive variances $\sigma^2_{\star\mid y}$ corresponding to $\hp_0$ and $\what{\hp}$, respectively.
		The black dots indicate the input sample locations $x$.
		$f(x)$ has a constant mean function, $m_{\alpha}(x) = \alpha$ and a squared exponential covariance function \eqref{eq:SE} with hyperparameters as in \eqref{eq:SE_theta_0}.
	}
	\label{fig:meanConst_covSEard_1000_iterations}
\end{figure}

Another simple but common mean function is the linear mean $m_{\alpha}(x) = \alpha x$.
Again, using a process with the squared exponential covariance function,\eqref{eq:SE}, with hyperparameters $\hpcov_0$ and noise level $\sigma^2_0$ as in \eqref{eq:SE_theta_0} but with a linear mean function with $\alpha_0 = 2$, we evaluate the empirical MSE (obtained by $10^3$ Monte Carlo samples) and the theoretical bounds in Figure~\ref{fig:meanLinear_covSEard_1000_iterations}.
The hyperparameters were learned using $N = 25$ observations. The gap
between the bounds become even more pronounced in predictions outside
the sampled region. 
\begin{figure}[!htb]
	\centering  
	\pgfplotsset{
		every axis/.append style={
			legend style={
				font=\small,
				row sep=1pt
				},
			axis line on top
			},
		every axis post/.append style={
			ymin=0,
			ymax=11,
			xmin=-8,
			xmax=8,
			legend style={
				draw=none,
				fill=white,
				at={(0.60,0.97)}
			},
		},
        every axis plot post/.append style={
			mark size=3pt,
			very thick,
			},
        every tick label/.append style={
			font=\footnotesize
			}
      }
  \setlength{\figureheight}{.40\columnwidth}
  \setlength{\figurewidth}{.80\columnwidth}
	\ifarxiv
		\includegraphics{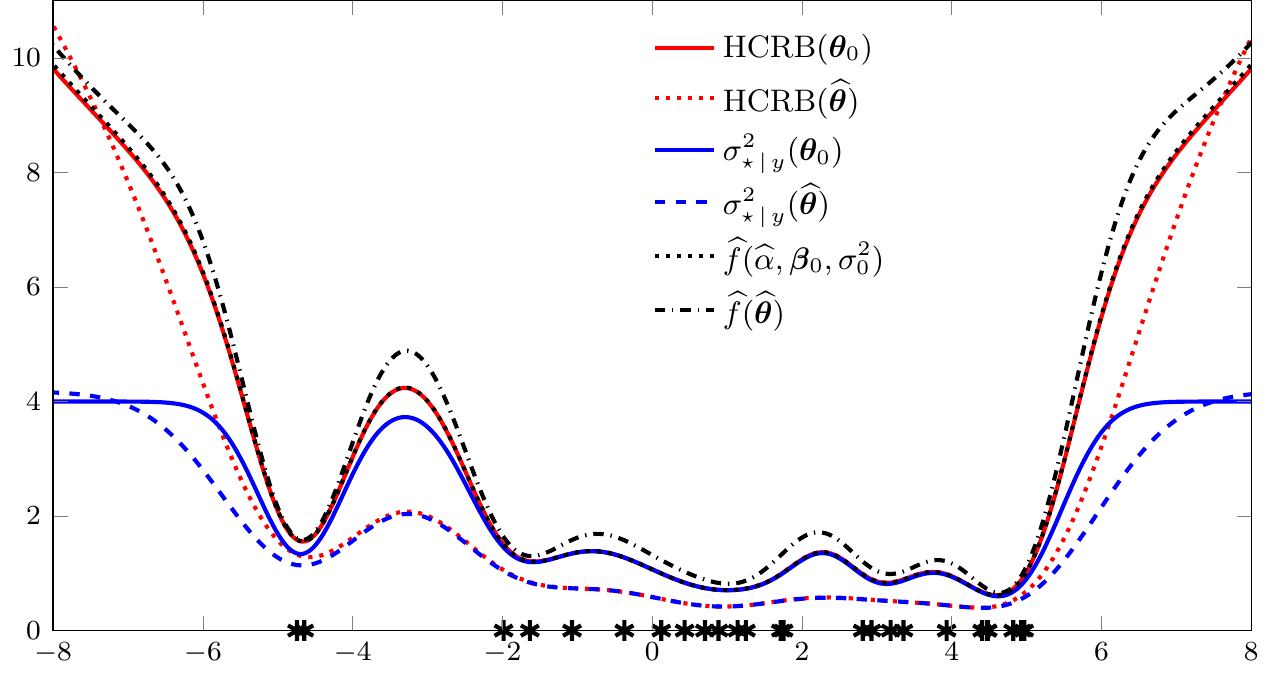}
	\else
		\tikzsetnextfilename{meanLinear_covSEard_1000_iterations}
		\input{fig/meanLinear_covSEard_1000_iterations.pgf}
	\fi

	\caption{
		MSE of predictors using learned hyperparameters and the bounds \eqref{eq:BCRB} and \eqref{eq:HCRB}.
		Here  $f(x)$ has a linear mean function, $m_{\alpha}(x) = \alpha x$ and a squared exponential covariance function \eqref{eq:SE} with $\alpha_0 = 2$, and $\hpcov_0$ and $\sigma_0^2$ as in \eqref{eq:SE_theta_0}.
	}
	\label{fig:meanLinear_covSEard_1000_iterations}
\end{figure}

Next, we consider an example inspired by frequency estimation in colored noise, which is a challenging problem.
We model a process $f(x)$ using a sinusoid mean function
\begin{equation}
	m_{\alpha}(x) = \alpha_1 \sin\left( \alpha_2 x + \alpha_3 \right).
	\label{eq:meanSinP}
\end{equation}
with unknown amplitude, frequency and phase, and a squared exponential covariance function \eqref{eq:SE}.
Here, we let $\hpmean_0 = \bbm 3 & 2 & \pi/4 \ebm^{\Transp}$, $\hpcov_0 = \bbm 0.5 & 3\ebm^{\Transp}$ and $\sigma^2_0 = 0.5^2$.
The process was sampled at $25$ non-uniformly spaced input points, cf. Figure~\ref{fig:meanSinP_covSEard_1000_iterations}.
Note how the conditional variance $\sigma^2_{\star\mid y}$ severely underestimates the uncertainty in the predictions between $-5$ and $-3$, but how the HCRB, even when estimated from data, provides a much more accurate bound.

\begin{figure}[!htb]
	\centering  
	\pgfplotsset{
		every axis/.append style={
			legend style={
				font=\small,
				row sep=1pt
				},
			axis line on top},
		every axis post/.append style={
			xmin=-8,
			xmax=8,
			legend style={
				draw=none,
				fill=white,
				at={(0.60,0.97)}
			},
		},
        every axis plot post/.append style={
			mark size=3pt,
			very thick,
			},
        every tick label/.append style={
			font=\footnotesize
			}
      }
    \setlength{\figureheight}{.40\columnwidth}
    \setlength{\figurewidth}{.80\columnwidth}
	%
	\ifarxiv
		\includegraphics{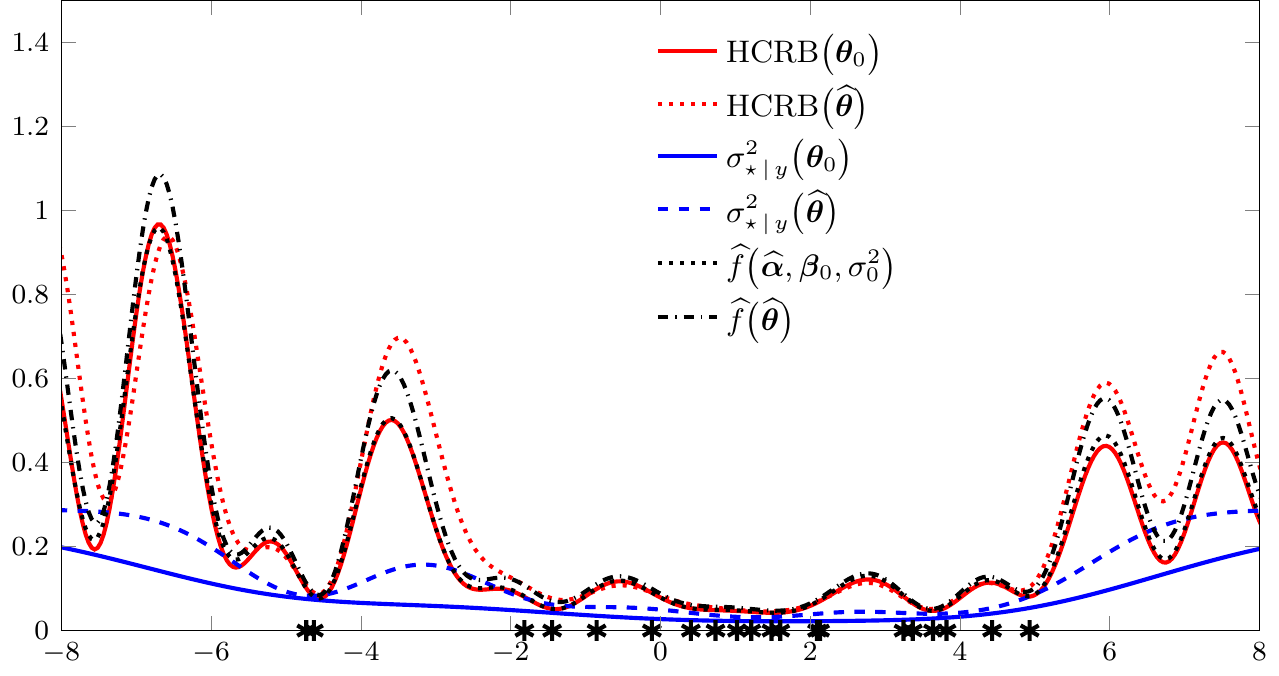}
	\else
		\tikzsetnextfilename{meanSinP_covSEard_1000_iterations}
		\input{fig/meanSinP_covSEard_1000_iterations.pgf}
	\fi

	\caption{%
		Empirical MSE and theoretical and estimated lower bound on MSE for a process $f(x)$ with a sinusoidal mean function $m_{\alpha}(x) = \alpha_1 \sin \left(\alpha_2 x + \alpha_3\right)$ and squared exponential covariance function \eqref{eq:SE}, with $\hpmean_0 = \protect\bbm 3 & 2 & \frac{\pi}{4} \protect\ebm^{\Transp}$, $\hpcov_0 = \protect\bbm 0.5^2 & 3 \protect\ebm^{\Transp}$ and $\sigma^2_0 = .5^2$.
	}
	\label{fig:meanSinP_covSEard_1000_iterations}
\end{figure}

\subsection{Marginalizing the mean parameters}

For the special case in which the mean function is linear in the parameters, that is, $m_\alpha(\x) = \hpmean^\Transp \mbf{u}(\x)$, it is possible to consider an alternative parameterization: A Gaussian hyperprior $\hpmean \sim \mathcal{N}(\0,\text{diag}(\wtilde{\hpcov}))$ can be assigned with positive covariance parameters $\wtilde{\hpcov}$.
By marginalizing out $\hpmean$ from $f(\x)$, we then obtain an additional term to the covariance function $k_{\wtilde{\beta}}(\x, \x') =
\mbf{u}^\Transp(\x)\text{diag}(\wtilde{\hpcov}) \mbf{u}(\x')$ where $\wtilde{\hpcov}$ is augmented to the hyperparameters.
Correspondly, the mean function becomes $m(\x) \equiv \0$, which is a common assumption in the Gaussian process literature \parencite{Rasmussen&Williams2006_gaussian}.
This model parameterization will therefore have an alternative predictive variance $\sigma^2_{\star | y}$ that captures the uncertainty of the linear mean parameters.

To study the effect of this alternative parameterization on the bounds and prediction, we consider a linear trend $m_\alpha(x) = \alpha_1 + \alpha_2 x$ along with $k_{\beta}(x,x^\prime) = k_{\beta^{\text{SE}}}^{\text{SE}}(x,x^\prime)$ as given in \eqref{eq:SE}.
The data was generated according to this model and the $\HCRB$ is evaluated in Figure~\ref{fig:meanAffine_vs_covAffine_all_5000_iterations_hcrb}.
The marginalized model is here $m(x) \equiv 0$ and $k_{\beta^{\text{SE}}}^{\text{SE}}(x,x^\prime) + k_{\beta^{\text{aff}}}(x,x^\prime)$, where $$k_{\beta^{\text{aff}}}(x,x^\prime) = \beta_1 + \beta_2 x x^\prime$$ corresponds to the unknown linear mean function.
The predictive variance of this model was evaluated learning its hyperparameters using data from the original model and inserting them into $\sigma^2_{\star | y}$.

For the special case in which only $\hpcov^{\text{aff}}$ is learned, the correspondence between $\sigma^2_{\star | y}$ and $\HCRB$ is striking in Figure~\ref{fig:meanAffine_vs_covAffine_all_5000_iterations_hcrb}.
In this case, also the the predictors, using the original and marginalized models, respectively, perform nearly identically.
When all hyperparameters are learned in the original and marginal models, respectively, the empirical $\sigma^2_{\star | y}$ turns out to be more accurate than the empirical $\HCRB$ in the extremes.
The results suggest that for linear mean functions there is a potential advantage in using the marginalized model to assess the prediction accuracy.
However, in this case we also note that the performance of the predictor based on the marginalized model is degraded in comparison to
that based on the original model.

\begin{figure*}[!ht]
	\centering  
	\pgfplotsset{
		every axis/.append style={
			legend style={
				font=\small,
				row sep=1pt
				},
			axis line on top
			},
		every axis post/.append style={
			xmin=-8,
			xmax=8,
			ymax=11,
			legend style={
				draw=none,
				fill=white,
				at={(0.60,0.97)}
				},
			},
        every tick label/.append style={
			font=\footnotesize
			},
		every axis plot post/.append style={
			very thick,
			mark size={3pt},
			}
      }
	\setlength{\figureheight}{.40\textwidth}
  \setlength{\figurewidth}{.80\textwidth}
  %
	\ifarxiv
		\includegraphics{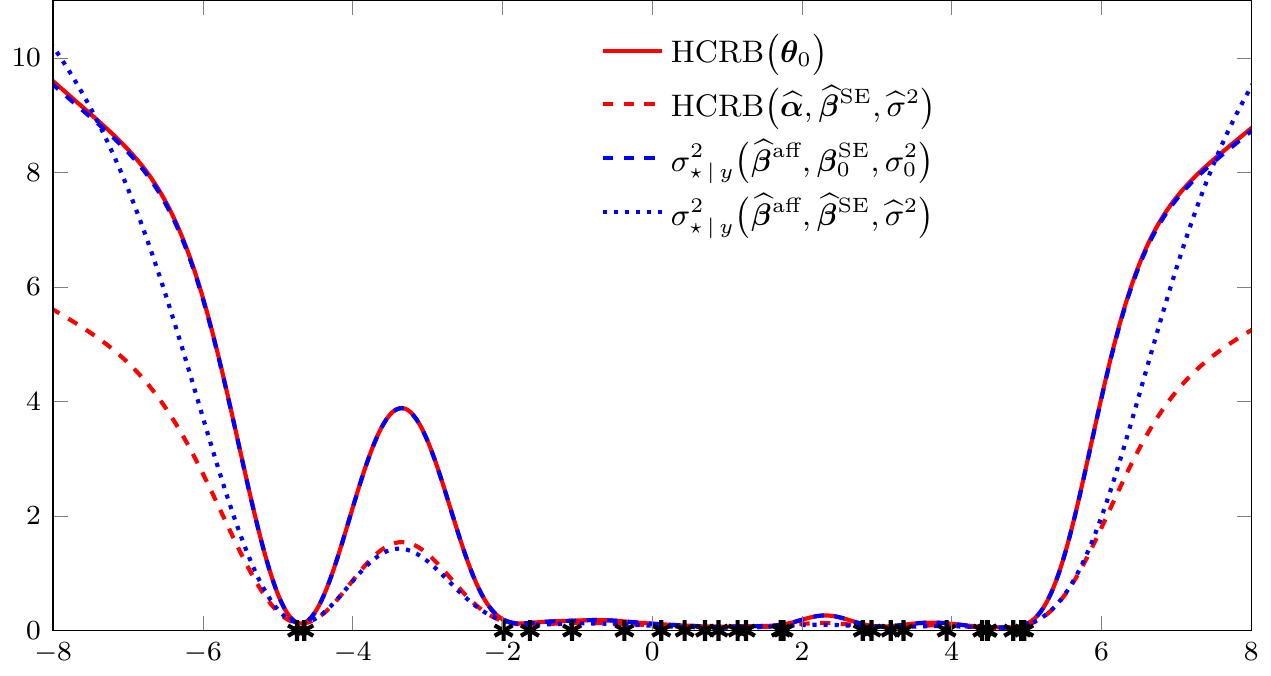}
	\else
		\tikzsetnextfilename{meanAffine_vs_covAffine_all_5000_iterations_hcrb}
		\input{fig/meanAffine_vs_covAffine_all_5000_iterations_hcrb.pgf}
	\fi

	\\\vspace{2em}
	%
	\ifarxiv
		\includegraphics{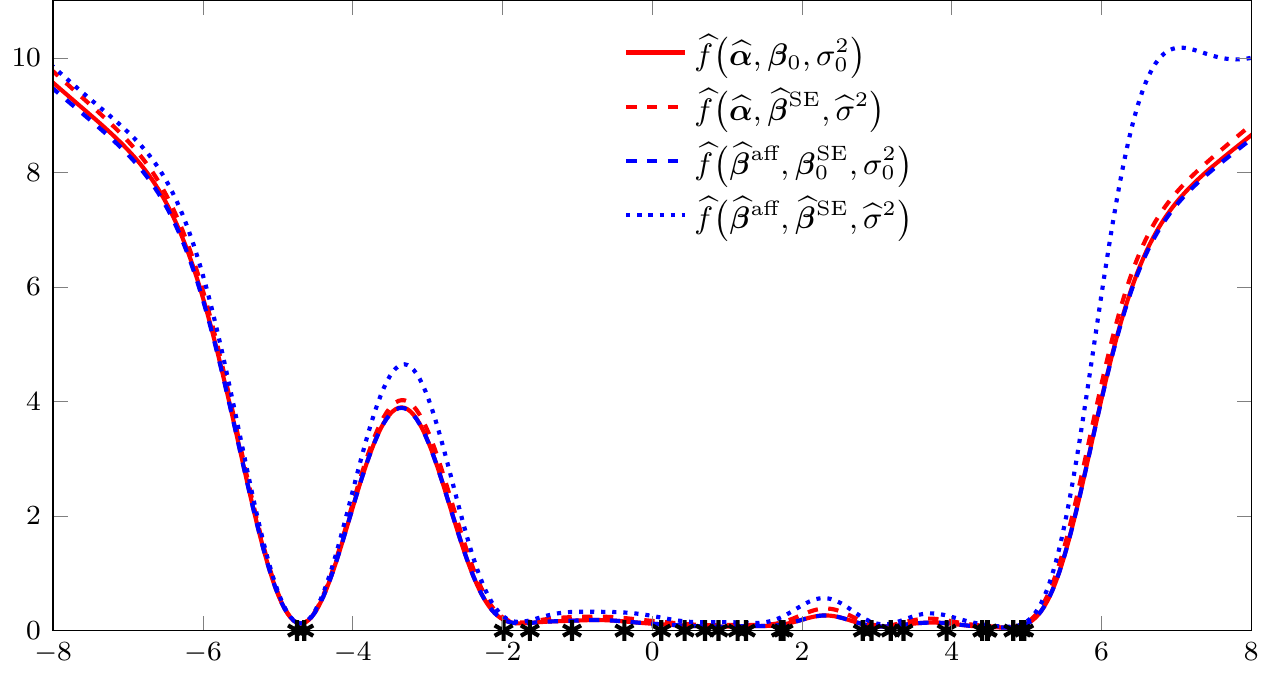}
	\else
		\tikzsetnextfilename{meanAffine_vs_covAffine_all_5000_iterations_mse}
		\input{fig/meanAffine_vs_covAffine_all_5000_iterations_mse.pgf}
	\fi

	\caption{
		A comparison between bounds and MSE for an original and marginalized data model.
		Top: The corresponding HCRB and predictive variance.
		Bottom: MSE of corresponding predictors.}
	\label{fig:meanAffine_vs_covAffine_all_5000_iterations_hcrb}
\end{figure*}

\subsection{\texorpdfstring{CO$_2$}{CO2} concentration data}

With the previous examples in mind, we now consider real $\text{CO}_2$ concentration data\footnote{\texttt{ftp://ftp.cmdl.noaa.gov/ccg/co2/trends/co2\_mm\_\allowbreak{}mlo.txt}} analyzed in \textcite{Rasmussen&Williams2006_gaussian}.
The data exhibits a trend as well as periodicities.
These features can be modeled using the mean and covariance functions considered in the previous example.
In addition, to capture smooth variations as well as erratic patterns, we consider using a squared-exponential kernel $k^{\text{SE}}(x,x^\prime)$ and a rational quadratic (RQ) kernel $k_{\beta}^{\text{RQ}}(x,x^\prime) = \beta^2_1 \left( 1 + \frac{1}{2\beta_2 \beta^2_3}\|x-x^\prime\|^2\right)^{-\beta_3}$.
The final covariance function can be written as: 
\begin{equation*}
	k_{\beta}(x,x^\prime) = k_{\beta}^{\text{SE}}(x,x^\prime) + k^{\text{per}}_{\beta}(x,x^\prime) + k_{\beta}^{\text{RQ}}(x,x^\prime).
\end{equation*}
In this example, the hyperparameters are learned using monthly data from the years 1995 to 2003.
The prediction error bars using the predictive variance and HCRB are plotted in Figure~\ref{fig:co2_3std}.
Using validation data from 2004 to March 2016 we assess the error bars.
As can be seen several data points fall outside of standard approach fall outside of the 99.7\% credibility region but are contained in the $\HCRB$ region.
\begin{figure*}[!ht]
	\centering  
	\pgfplotsset{
		every axis/.append style={legend style={font=\small,row sep=1pt},axis line on top},
		every axis post/.append style={
			ymin=355,
			ymax=420,
			xmin=1995,
			xmax=2020,
			xtick={1995,2000,...,2020},
			grid,
			grid style={dotted,black!50!white},
			legend style={draw=none,fill=white},
			legend image post style={mark size=2pt},
			/pgf/number format/.cd,
			use comma,
			1000 sep={}
		},
        every axis plot post/.append style={
          mark size=0.8pt},
        every tick label/.append style={font=\footnotesize}
      }
    \setlength{\figureheight}{.40\textwidth}
    \setlength{\figurewidth}{.80\textwidth}
  %
	\ifarxiv
		\includegraphics{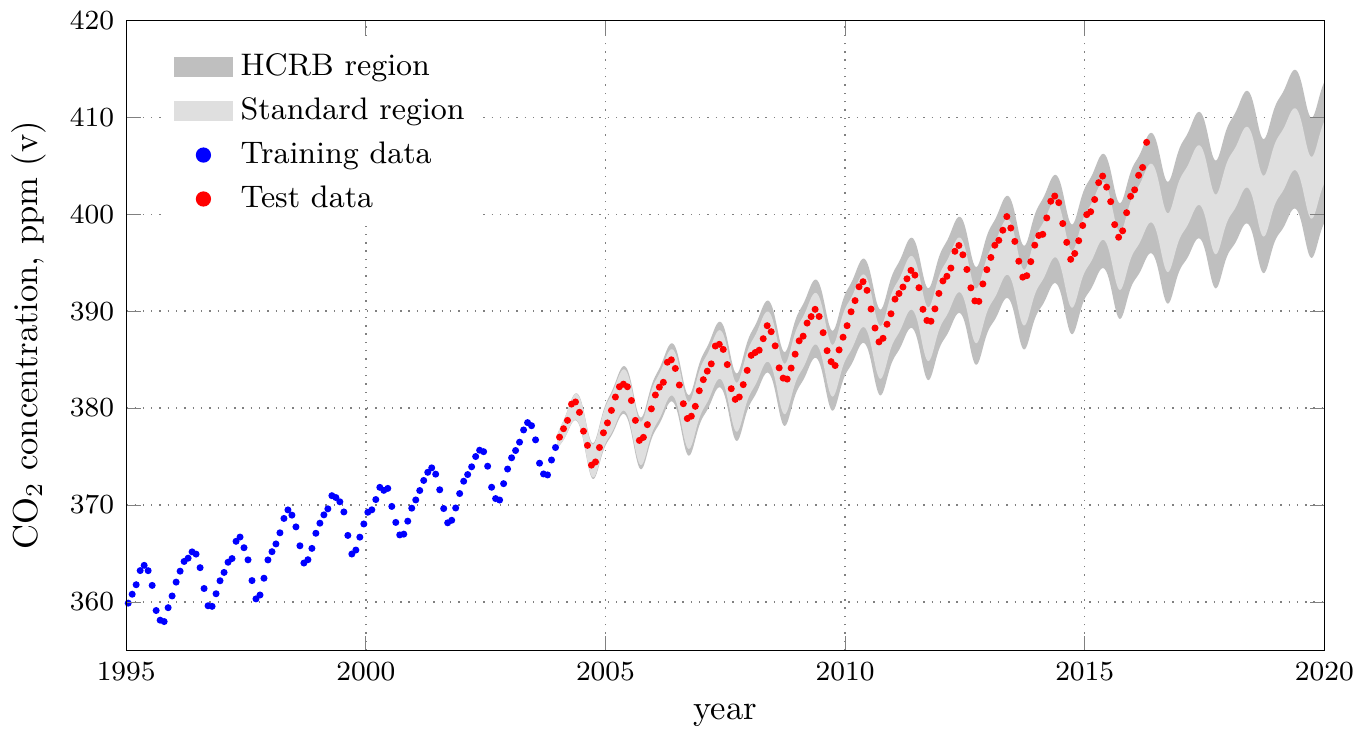}
	\else
		\tikzsetnextfilename{co2_3std}
		\input{fig/co2_3std.pgf}
	\fi

	\caption{
		Monthly average atmospheric CO$_2$ concentration measured at Mauna Loa.
		GP model fit on data until December 2003.
		Error bars based on $\what{f}(x) \pm 3\sigma_{\star|y}$ and $\what{f}(x) \pm 3\sqrt{\HCRB}$.
	}
	\label{fig:co2_3std}
\end{figure*}

\section{Discussion}
We used the Hybrid Cram\'er-Rao Bound as a tool to analyze the prediction performance of Gaussian process regression after learning.
When comparing the new bound with the commonly used predictive variance we showed that the latter will systematically underestimate the minimum MSE, even for the simplest datasets with unknown constant mean.
This leads to incorrect prediction error bars.
The underestimation gap arises from uncertainty of the hyperparameters and we provide an explicit and general characterization of it.
The resulting HCRB is a simple closed-form expression and computationally cheap to implement.

In the examples we showed that the HCRB provides a tighter lower bound of the MSE for the standard predictor than the nominal predictive variance.
The $\HCRB$ is easily computed using the quantities in the predictor itself and provides more accurate error bars, even when using estimated hyperparameters.
In future work, we will investigate the accuracy of the estimated $\HCRB$ further.
For the special case of linear mean functions, the results indicate a possible advantage of using an
alternative marginalized model and assessing its corresponding BCRB
using learned model parameters.

\subsubsection*{Acknowledgments}
The authors would like to thank Dr. Marc Deisenroth and Prof. Carl E. Rasmussen for fruitful discussions.

This research is financially supported by the Swedish Foundation for Strategic Research (SSF) via the project \emph{ASSEMBLE} (Contract number: RIT15-0012) .
The work was also supported by the Swedish research Council (VR) via the projects \emph{Probabilistic modeling of dynamical systems} (Contract number: 621-2013-5524) and (Contract number: 621-2014-5874).

\appendix
\section{Alternative derivation of the bound}\label{sec:appendixB}
Unlike \textcite{Rockah&Schultheiss1987_array,vanTrees2013_detection}, we
will here prove Result~\ref{res:hcrb} assuming only that the bias of
the estimator with respect to $\check{f}_\star$ is invariant to
$\hp$. That is, 
\begin{equation*}
b(\hp) \triangleq \E_{y}\left[  \check{f}_\star(\hp) - \what{f}_\star
  \right] \equiv b,
\end{equation*}
where $b$ is a constant.

We begin by decomposing the MSE of an estimator $\what{f}_\star$:
\begin{equation}\
\begin{split}
\MSE{\what{f}_\star} &= \E[|f_\star - \what{f}_\star |^2] \\
&= \E_{y}\left[ \E_{f|y}\left[|f_\star - \check{f}_\star + \check{f}_\star - \what{f}_\star |^2\right] \right] \\
&= \sigma^2_{\star|y} + \E_{y}\left[ | \check{f}_\star - \what{f}_\star  |^2 \right],
\end{split}
\label{eq:MSEdecomposition}
\end{equation}
where $\check{f}_\star = \check{f}_\star(\hp)$ is the conditional mean 
\eqref{eq:predictor} of $f_\star$. Since the first term in \eqref{eq:MSEdecomposition}, $\sigma^2_{\star|y}$, is independent of
the estimator we will focus on finding a lower bound for the second term.

For notational simplicity, define the score function of the training
data pdf as:
$$\ivec = \frac{\pder}{\pder \hp} \ln p(\y | \hp ).$$
Then the correlation between the score function and the estimation
error is
\begin{equation*}
\begin{split}
\wtilde{\mbf{g}} &= \E_{y}\left[ \ivec (  \check{f}_\star - \what{f}_\star
  )\right] \\ &= \int \left[\frac{\pder}{\pder \hp} p(\y|\hp) \right] (  \check{f}_\star - \what{f}_\star
  )  \: d \y \\
&= \int \frac{\pder}{\pder \hp} \left[ p(\y|\hp)  (  \check{f}_\star - \what{f}_\star
  )\right]  - p(\y|\hp) \left[ \frac{\pder}{\pder \hp} (  \check{f}_\star - \what{f}_\star
  ) \right] \: d \y  \\
&= 0 -\E_{y} \left[ \frac{\pder}{\pder \hp}   \check{f}_\star \right] \\
&= -\begin{bmatrix} \mbf{g}^{\Transp} & \0 & 0 \end{bmatrix}^{\Transp}.
\end{split}
\end{equation*}
The first set of zeros follows from
\begin{equation*}
\begin{split}
\E_{y}\left[ \frac{\partial}{\partial \hpcov} \check{f}_\star  \right]
= \left(
\frac{\partial}{\partial \hpcov} \linpred^\Transp \right) \E_{y}\left[  
 (\y - \m) \right] = \0.
\end{split}
\end{equation*}
The final zero follows in a similar manner.

The Fisher information matrix is given by Slepian-Bangs formula:
\begin{equation*}
\begin{split}
\wtilde{\FIM} &\triangleq \E_{y} \left[ \frac{\pder}{\pder \hp} \ln p(\y | \hp ) \frac{\pder}{\pder \hp} \ln p(\y | \hp )^\Transp \right] = \begin{bmatrix} \mbf{M} &  \0 & \0 \\ \0  & * & * \\
 \0  & * & * \end{bmatrix}
\end{split}
\end{equation*}
The zeros therefore follow from the properties of the Gaussian distribution.
We now form the product $\wtilde{\mbf{g}}^\Transp \wtilde{\FIM}^{-1}
  \ivec$ and the non-negative quadratic function
\begin{equation*}
\begin{split}
0 &\leq \E_{y}\left[ | (\check{f}_\star - \what{f}_\star) - \wtilde{\mbf{g}}^\Transp \wtilde{\FIM}^{-1}
  \ivec |^2 \right] \\
&= \E_{y}\left[ | \check{f}_\star - \what{f}_\star  |^2 \right] +
\wtilde{\mbf{g}}^\Transp \wtilde{\FIM}^{-1}\wtilde{\mbf{g}} - 2
\wtilde{\mbf{g}}^\Transp \wtilde{\FIM}^{-1} \E_{y}\left[ \ivec ( \check{f}_\star - \what{f}_\star
  )\right] \\
 &= \E_{y}\left[ | \check{f}_\star - \what{f}_\star  |^2 \right]-  \wtilde{\mbf{g}}^\Transp \wtilde{\FIM}^{-1}\wtilde{\mbf{g}}.
\end{split}
\end{equation*}
It follows that
\begin{equation*}
\begin{split}
\E_{y}\left[ | \check{f}_\star - \what{f}_\star  |^2 \right] \: \geq \: \wtilde{\mbf{g}}^\Transp
\wtilde{\FIM}^{-1}\wtilde{\mbf{g}}  
\end{split}
\end{equation*}
Thus \eqref{eq:MSEdecomposition} is lower bounded by
\begin{equation*}
\begin{split}
\MSE{\what{f}_\star} \geq \sigma^2_{\star|y}  + \mbf{g}^\Transp
\mbf{M}^{-1} \mbf{g} ,
\end{split}
\end{equation*}
which is Result~\ref{res:hcrb}.

\section{Proof of equality}\label{sec:appendix:proofofequality}
 Recall that
 $\yfmean = \bbm \m^{\Transp} & m_{\star} \ebm^{\Transp}$, 
 $\Sigma = \bbm \K+\sigma^{2}\mbf{I} & \mbf{k}_{\star} \\ \mbf{k}_{\star}^{\Transp} & k_{\star\star}\ebm$,
 $\linpred = \left( \mbf{K} + \hpstd^{2} \mbf{I} \right)^{-1} \mbf{k}_{\star}$,
 $\dmean = \frac{\partial}{\partial \hpmean} (m_{\star} - \linpred^{\top} \m )$, 
 and $\sigma^{2}_{\star\mid y} = k_{\star\star} - \mbf{k}^{\Transp}_{\star}
 \left( \mbf{K} + \hpstd^{2} \mbf{I} \right)^{-1} \mbf{k}_{\star}$.
 Then the following holds.
 \begin{equation*}
 \frac{\partial \yfmean^{\Transp}}{\partial
     \hpmean} \yfcov^{-1} \frac{\partial \yfmean}{\partial
   \hpmean^{\Transp}}  - \sigma^{-2}_{\star|y} \dmean \dmean^{\Transp}
 =\frac{\partial
   \m^{\Transp}}{\partial \hpmean} (\K + \sigma^{2} \mbf{I} )^{-1}
 \frac{\partial \m}{\partial \hpmean^{\Transp}}
 \end{equation*}
 \begin{proof}
 \newcommand{\Sy}{\mbf{\Sigma}_{y}}
 Let $\Sy = \K + \sigma^2\mbf{I}$.
 \begin{align*}
 \frac{\partial \yfmean^{\Transp}}{\partial \hpmean} \yfcov^{-1} \frac{\partial \yfmean}{\partial \hpmean^{\Transp}}
     &= \frac{\partial \yfmean^{\Transp}}{\partial \hpmean} \bbm \Sy & \mbf{k}_{\star} \\ \mbf{k}_{\star}^{\Transp} & k_{\star\star}\ebm^{-1}\frac{\partial \yfmean}{\partial \hpmean^{\Transp}} \\
     &= \frac{\partial }{\partial \hpmean}\bbm \m^{\Transp} & m_{\star} \ebm \bbm \Sy^{-1} + \Sy^{-1}\mbf{k}_{\star}\sigma_{\star\mid y}^{-2}\mbf{k}_{\star}^{\Transp}\Sy^{-1} & -\Sy^{-1}\mbf{k}_{\star}\sigma_{\star\mid y}^{-2} \\ -\sigma_{\star\mid y}^{-2}\mbf{k}_{\star}^{\Transp}\Sy^{-1}  & \sigma_{\star\mid y}^{-2}\ebm\frac{\partial}{\partial \hpmean^{\Transp}}\bbm \m \\ m_{\star} \ebm \\
     &= \frac{\partial }{\partial \hpmean}\bbm \m^{\Transp} & m_{\star} \ebm \bbm \Sy^{-1} & 0 \\ 0 & 0\ebm\frac{\partial}{\partial \hpmean^{\Transp}}\bbm \m \\ m_{\star} \ebm +\sigma_{\star\mid y}^{-2} \frac{\partial }{\partial \hpmean}\bbm \m^{\Transp} & m_{\star} \ebm \bbm \linpred \linpred^{\Transp} & -\linpred \\ -\linpred^{\Transp} & 1\ebm\frac{\partial}{\partial \hpmean^{\Transp}}\bbm \m \\ m_{\star} \ebm \\
     &= \frac{\partial \m^{\Transp}}{\partial \hpmean} \Sy^{-1} \frac{\partial \m}{\partial \hpmean^{\Transp}} + \sigma_{\star \mid y}^{-2} \frac{\partial}{\partial \hpmean}(m_{\star}-\linpred^{\Transp} \m)\frac{\partial}{\partial \hpmean^{\Transp}}(m_{\star}-\linpred^{\Transp} \m)^{\Transp} \\
     &= \frac{\partial \m^{\Transp}}{\partial \hpmean} \left( \K + \sigma^{2}\mbf{I}\right)^{-1} \frac{\partial \m}{\partial \hpmean^{\Transp}} + \sigma_{\star \mid y}^{-2} \dmean \dmean^{\Transp}
 \end{align*}
 \end{proof}

\printbibliography

\end{document}